\documentclass[accepted]{uai2026} 
\usepackage[american]{babel}

\usepackage{natbib} 
\usepackage{mathtools} 
\usepackage{booktabs} 
\usepackage{tikz} 
\usepackage[capitalize]{cleveref}
\usepackage[textwidth=1.5cm]{todonotes}
\usepackage{algorithm}
\usepackage{algorithmic}
\usepackage{amsfonts}
\usepackage{amsthm}
\usepackage{thmtools}
\usepackage{bm}

\definecolor{TueAIdark}{RGB}{56,56,56}
\definecolor{TueAIgray}{RGB}{246,246,246}
\definecolor{TueAIdarkblue}{RGB}{26,58,91}
\definecolor{TueAIaccent}{RGB}{234,75,46}
\definecolor{TueAIlightblue}{RGB}{133,203,210}
\definecolor{TueAIoceanblue}{RGB}{119,221,204}
\definecolor{TueAIoceangreen}{RGB}{119,221,159}
\definecolor{TueAIspringgreen}{RGB}{186,213,72}
\definecolor{TueAIbrightyellow}{RGB}{255,221,0}
\hypersetup{colorlinks=true,
  citecolor=TueAIdarkblue,
  urlcolor=TueAIoceanblue,
  linkcolor=TueAIaccent}

\declaretheorem[Refname={Theorem,Theorems}]{theorem}

\declaretheorem[name=Corollary,numberlike=theorem,Refname={Corollary,Corollaries}]{corollary}

\newcommand{\R}{\mathbb{R}}
\newcommand{\Rd}{\mathbb{R}^d}

\newcommand{\Normal}{\mathcal{N}}

\newcommand{\GP}{\mathcal{GP}}
\newcommand{\Kmat}{\mathbf{K}}            
\newcommand{\kvec}{\mathbf{k}}            
\newcommand{\zvec}{\mathbf{z}}            
\newcommand{\yvec}{\mathbf{y}}            
\newcommand{\xvec}{\mathbf{x}}            
\newcommand{\Ihat}{\hat{I}}               
\newcommand{\sighat}{\hat{\sigma}}         

\newcommand{\domain}{\Omega}

\newcommand{\lscale}{\ell}
\newcommand{\outscale}{\sigma^2_f}

\title{Hierarchical Bayesian Quadrature}

\author[1]{\href{mailto:tim.weiland@uni-tuebingen.de?Subject=Your UAI 2026 paper}{Tim~Weiland}{}}
\author[2]{Toni~Karvonen}
\author[1]{Philipp~Hennig}
\affil[1]{%
    Tübingen AI Center\\
    University of Tübingen\\
    Tübingen, Germany
}
\affil[2]{%
    School of Engineering Sciences\\
    Lappeenranta--Lahti University of Technology LUT\\
    Lappeenranta, Finland
}

\begin{document}
\maketitle

\begin{abstract}
  Numerical integration is a cornerstone of various scientific computing applications, such as engineering simulations and model evidence computations in probabilistic machine learning.
  Bayesian Quadrature uses Gaussian process surrogates that explicitly encode structural assumptions about the integrand to obtain integral estimates with quantified uncertainty.
  These surrogates are predominantly based on stationary covariance functions, which results in model misspecification for integrands exhibiting nonstationary behavior.
  We tackle this issue through an adaptively growing, tree-based partition of the integration domain into local stationary models.
  Our method recombines the local integral estimates through a hierarchy of GP conditioning that reintroduces cross-subdomain correlations, while model selection criteria control the tree growth to avoid unnecessary partitioning.
  The resulting algorithm is simple, requires no MCMC, and adapts its evaluation budget to local integrand complexity.
  On benchmark integration problems and a model evidence computation for an epidemiological model, Hierarchical Bayesian Quadrature achieves substantial gains over standard Bayesian Quadrature on nonstationary integrands while matching its performance on stationary ones.
\end{abstract}

\section{Introduction}\label{sec:intro}
Many tasks in fields like scientific computing and probabilistic machine learning require the computation of a definite integral of a function $f$ over some domain $\domain \subset \Rd$:
\begin{equation}
  I = \int_\Omega f(\bm{x}) d\bm{x} .
  \label{eq:definite-integral}
\end{equation}
For example, the integral could represent a model evidence or the normalizing constant in Bayes' formula~\citep{osborne2012, Adachi2022}, or summary statistics for a computer simulation~\citep{li2023}.
In practical applications, closed-form expressions for \cref{eq:definite-integral} are often either unavailable or too expensive to evaluate.

\textbf{Numerical integration} is the task of computing an approximation to \cref{eq:definite-integral}.
Such methods typically consider the integrand $f$ to be a black box by only assuming access to function evaluations $f(\bm{x})$ for $\bm{x} \in \Omega$.

There are various classes of algorithms for numerical integration that mainly differ in the extent to which they make prior assumptions about the integrand.
Monte Carlo and quasi-Monte Carlo methods \citep{evansApproximatingIntegralsMonte2000, DickKuoSloan2013} make minimal assumptions about the integrand and are particularly prevalent for high-dimensional integration.
In contrast, classical quadrature rules \citep{Davis1984}, such as Gaussian quadratures, exploit integrand smoothness to achieve faster convergence rates.

\textbf{Bayesian quadrature} (BQ) \citep{ohagan1991, mahsereci2026} approaches numerical integration through a Gaussian process (GP) surrogate model of the integrand.
Conditioning on point evaluations $f(x)$ yields a GP posterior.
As Gaussians are closed under linear operations, the integral of the posterior is a Gaussian random variable.
Thus, BQ naturally produces not only a point estimate for the integral, but also a calibrated error estimate.
For suitable choices of the kernel function of the GP, the associated computations may be performed in closed-form~\citep{briol2025}.
The ability of BQ to flexibly express assumptions such as smoothness and periodicity about the integrand through the GP prior is a major strength of the method.
Well-specified priors yield faster convergence, well-calibrated uncertainty estimates, and are more robust~\citep{Briol2019}.

\begin{figure*}[t]
  \centering
  \includegraphics[width=\linewidth]{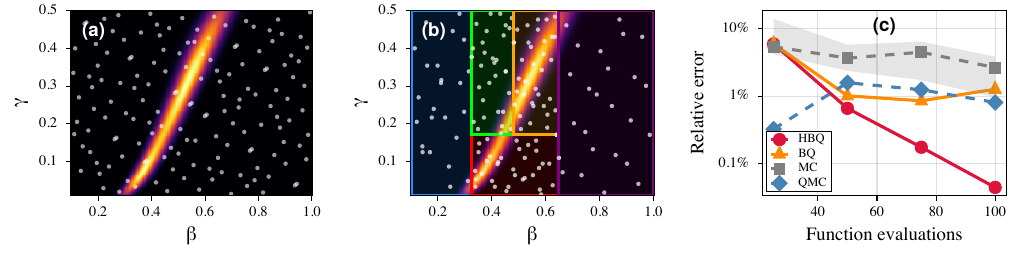}
  \caption{
    \textbf{Adaptive budget allocation for model evidence computation.}
    We consider an SIR epidemiological model with infection rate $\beta$ and recovery rate $\gamma$; each evaluation requires solving an ODE system.
    The integrand (normalized likelihood times prior) exhibits a curved ridge due to the $\beta / \gamma$ correlation.
    \textbf{(a)}~BQ with $N\!=\!150$ Sobol points: uniform coverage regardless of local complexity.
    \textbf{(b)}~HBQ with $N\!=\!150$: adaptive partitioning concentrates evaluations on the ridge.
    \textbf{(c)}~Relative error in $\log Z$; MC shows median and interquartile range over 20 runs.
  }
  \label{fig:sir-motivation}
\end{figure*}

However, BQ methods predominantly employ stationary covariance functions.
This results in \textbf{model misspecification} for integrands best described by nonstationary processes whose variability depends on location.
While BQ is known to exhibit certain robustness to smoothness misspecification~\citep{Kanagawa2020, mahsereci2026}, nonstationarity remains an unsolved challenge.
Nonstationary integrands arise naturally for example when computing posterior expectations over peaked likelihood surfaces, as well as in engineering simulations~\citep{gramacy2008}. \Cref{fig:sir-motivation} illustrates this on a model evidence computation for an SIR epidemiological model, where the integrand exhibits a sharp ridge surrounded by flat regions.
A single stationary kernel must then compromise between resolving the ridge and efficiently covering flat regions, causing inefficient allocation of the evaluation budget.

\textbf{Contributions:} 
We present \textbf{Hierarchical Bayesian Quadrature} (\textbf{HBQ}), a BQ method that automatically detects nonstationary integrand behavior and adaptively invests its compute budget accordingly.
We achieve this through a tree-based partition of the domain where the leaf surrogates are stationary GPs with varying lengthscale and smoothness.
We present an algorithm to adaptively and robustly grow such a tree on-the-fly via marginal likelihood optimization and model selection techniques.

Our method requires no MCMC and introduces little computational overhead; in fact, domain splitting can accelerate GP inference by reducing the size of each local Gram matrix.
On integrands exhibiting nonstationary behavior, HBQ achieves substantial performance gains over standard BQ, while matching its performance on stationary integrands.
Indeed, our method is closely related to the Genz--Malik rule, a state-of-the-art deterministic adaptive quadrature method.

\section{Background}\label{sec:background}

\subsection{Bayesian Quadrature}\label{sec:bq}
Bayesian Quadrature (BQ) models the integrand $f$ through a Gaussian process (GP) prior with mean function $\mu$ and kernel function $k$: $f \sim \mathcal{GP}(\mu, k)$.
Without loss of generality, we assume $\mu = 0$.
Through its smoothness and other properties the kernel function encodes assumptions about the function $f$.
See \citet{RasmussenWilliams2006} and \citet{mahsereci2026} for introductions to GPs and BQ, respectively.

We collect evaluations $\yvec$ of the integrand at locations $\bm{X} \coloneqq \left( \bm{x}_1, \dots, \bm{x}_N \right)$, where $\bm{x}_i \in \domain$.
The posterior under the evaluations is again a GP: $f \mid \left( f(\bm{X}) = \yvec \right) \sim \GP(\mu^*, k^*)$ with
\begin{align}
  \mu^*(\bm{x}) &= \kvec(x, \bm{X}) \Kmat^{-1} \yvec, \label{eq:gp-post-mean} \\
  k^*(\bm{x}, \bm{x}') &= k(\bm{x}, \bm{x}') - \kvec(\bm{x}, \bm{X})^\top \Kmat^{-1} \kvec(\bm{X}, \bm{x}'), \label{eq:gp-post-cov}
\end{align}
where $\Kmat \coloneqq [k(\bm{x}_i, \bm{x}_j)]_{i,j=1}^{N}$ is the Gram matrix and $\kvec(\bm{x}, \bm{X}) \coloneqq \kvec(\bm{X}, \bm{x}) = [k(\bm{x}, \bm{x}_i)]_{i=1}^N$.

Bounded linear transformations of Gaussian processes remain Gaussian with closed-form mean and covariance \citep{pfoertner2024}.
Consequently, we have
\begin{equation}
  \int_\domain f(\bm{x}) \, d\bm{x} \;\Big|\; \left( f(\bm{X}) = \yvec \right) \;\sim\; \Normal\!\left(\Ihat,\, \sighat^2\right), \label{eq:bq-posterior}
\end{equation}
with
\begin{align}
  \Ihat &= \zvec^\top \Kmat^{-1} \yvec, \label{eq:bq-mean} \\
  \sighat^2 &= \int_\domain \int_\domain k(\bm{x}, \bm{x}') \, d\bm{x} \, d\bm{x}' - \zvec^\top \Kmat^{-1} \zvec, \label{eq:bq-var}
\end{align}
where $\zvec \coloneqq [ \int_\domain k(\bm{x}, \bm{x_i}) \, d\bm{x} ]_{i=1}^{N}$ is the vector of kernel mean embeddings.

For suitable choices of kernel functions (see \citet{briol2025}), we can compute the quantities in \cref{eq:bq-mean,eq:bq-var} --- and thus the integral of the posterior --- in closed-form.
We obtain a point estimate $\Ihat$ of the true integral \cref{eq:definite-integral} with quantified uncertainty $\sighat^2$ that shrinks as the number of point evaluations $N$ grows.
In this sense, BQ is a prototypical probabilistic numerical method.
These are methods that cast numerical tasks as learning problems to be solved with the tools of Bayesian inference~\citep{hennig2015,pnbook22}.

\subsection{Hyperparameter Selection}\label{sec:hyperparams}
In order to obtain both an accurate point estimate and calibrated uncertainty, we require a well-specified prior.
We consider a stationary kernel function of the form
\begin{equation}
  k(\bm{x}, \bm{x'}) = \outscale \, \phi_\nu\!\left( \frac{\|\bm{x} - \bm{x'}\|}{\lscale} \right), \label{eq:kernel}
\end{equation}
where $\outscale$ is the output scale, $\lscale > 0$ is the lengthscale, and $\phi_\nu$ is a positive definite function parameterized by a smoothness parameter $\nu$.
In this work, we focus in particular on the Matérn family.
A Matérn-$\nu$ kernel is defined by
\begin{equation}\label{eq:matern}
  \phi_\nu(r) = \frac{2^{1-\nu}}{\Gamma(\nu)} \left(\sqrt{2\nu}\, r\right)^\nu K_\nu\!\left(\sqrt{2\nu}\, r\right),
\end{equation}
where $K_\nu$ is the modified Bessel function of the 2nd kind.

The hyperparameters $\theta = (\outscale, \lscale, \nu)$ are typically selected by maximizing the marginal likelihood $p(\yvec \mid \bm{X}, \theta)$.
The output scale $\outscale$ can be concentrated out analytically, leaving $\lscale$ (and optionally $\nu$) as the only parameters to optimize.

The lengthscale $\lscale$ controls the assumed regularity of the integrand: small $\lscale$ allows rapid variation, while large $\lscale$ implies slow variation. The smoothness parameter $\nu$ controls the differentiability of the sample paths.
For a nonstationary integrand, any single choice of $\lscale$ and $\nu$ is a compromise --- too smooth in some regions, too rough in others --- resulting in the model misspecification discussed in \cref{sec:intro}.

\section{Hierarchical Bayesian Quadrature}\label{sec:method}

\subsection{Domain Decomposition and Local BQ}\label{sec:decomposition}
\textbf{Nonstationary kernels.}
The natural response to the aforementioned misspecification would be a nonstationary kernel, such as the ones proposed by \citet{gibbs1997,paciorek2003}, and \citet{remes2017}.
Unfortunately, closed-form kernel mean embeddings are generally unavailable for nonstationary kernels, requiring costly approximation and negating much of BQ's computational appeal.

Our pragmatic response is to express nonstationary behavior through a hierarchy of stationary models on subdomains.
In doing so, we adapt to nonstationarity without sacrificing the benefits of stationary models.

\textbf{Domain decomposition.}
Assume we have a partition of the domain $\Omega$ into $K$ non-overlapping subdomains $\Omega_i$:
\begin{equation}
  \Omega = \cup_{i=1}^{K} \Omega_i,
\end{equation}
where the subdomains $\Omega_i$ are chosen to encapsulate heterogeneous variation and smoothness properties of the integrand.
We can then fit a local stationary GP $f_i \sim \GP(0, k_i)$ to each subdomain and use BQ to obtain integral estimates
\begin{equation}
  \int_{\Omega_i} f_i(\bm{x}) d\bm{x} \sim \Normal(\Ihat_i, \sighat_i) .
  \label{eq:leaf-integral-estimate}
\end{equation}
To keep all computations closed-form, we impose a few restrictions on the kernel class and the domain decomposition.

\textbf{Kernel choice.}
While HBQ can in principle use any kernels, we assume a half-integer Mat\'ern or Wendland kernel as a base kernel.
Both are of the form in \cref{eq:kernel} and adapt to function behavior via their lengthscale and smoothness parameters.
Kernel mean embeddings over intervals $[a, b]$ are available in closed form \citep{briol2025}.
For $d > 1$, we form a tensor product kernel $k(\bm{x}, \bm{x}') = \prod_{j=1}^{d} k_j(x_j, x_j')$.

\textbf{Domain choice.}
We further restrict the subdomains $\domain_i$ to axis-aligned hyperrectangles, i.e.\ products of intervals.
Due to the tensor product structure, the kernel mean embeddings are then closed-form as products of one-dimensional kernel mean embeddings over intervals.
Starting from the hyperrectangle $\domain$, we partition via axis-aligned binary splits.
This naturally produces hyperrectangular subdomains.

\begin{figure}[t]
  \centering
  \includegraphics[width=\columnwidth]{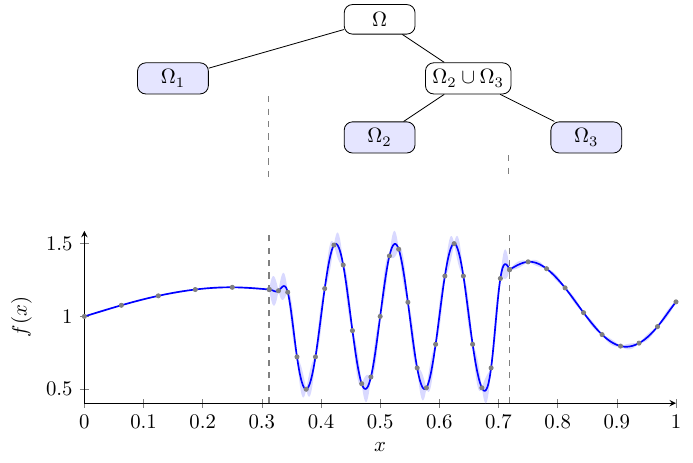}
  \caption{HBQ decomposes the domain into subdomains via a tree of axis-aligned splits. Each leaf fits a local stationary GP with its own lengthscale, adapting to local integrand complexity. Shaded regions show posterior uncertainty.}
  \label{fig:hbq-decomposition}
\end{figure}

  The resulting partition forms a binary tree (\cref{fig:hbq-decomposition}). What remains is to recombine leaf integral estimates into a global estimate and
  to decide adaptively when and where to split.

\subsection{Integral Recombination}\label{sec:recombination}
Given the leaf integral estimates from \cref{eq:leaf-integral-estimate}, how do we obtain the global integral?
Clearly, the global integral is
\begin{equation}
  \int_\domain f(\bm{x}) d\bm{x} = \sum_{i=1}^K \int_{\domain_i} f(\bm{x}) d\bm{x}.
  \label{eq:global-as-sum-of-local}
\end{equation}

The simplest approach is to assume independence of the local GPs.
Then \cref{eq:global-as-sum-of-local} is a sum of independent Gaussian random variables, and we obtain
\begin{equation}
  \int_\domain f(\bm{x}) \, d\bm{x} \;\Big|\; \left( f(\bm{X}) = \bm{y} \right) \sim \Normal\!\left(\sum_{i=1}^K \Ihat_i, \sum_{i=1}^K \sighat_i^2\right).
  \label{eq:independent-sum}
\end{equation}
We refer to this as the \emph{independent sum} estimate.
It is simple and introduces no overhead, but the underlying independence assumption is likely to be wrong in practice.

\textbf{Tree conditioning.}
To alleviate this issue, we exploit the tree structure of the partition.
When a leaf node is split, it becomes a branch node with children.
Rather than discarding the GP that was fitted to the leaf before the split, we retain it as the branch node's prior $k_{\text{p}}$.
Since this kernel was estimated on the full parent domain $\domain_{\text{p}} = \cup_{j=1}^{m} \domain_{\text{p},j}$, it encodes correlations across the split boundary.

Under this prior, the sub-integrals $I_j = \int_{\domain_{\text{p},j}} f(\bm{x}) \, d\bm{x}$ are jointly Gaussian with covariance
\begin{equation}
  C_{ij} = \int_{\domain_{\text{p},i}} \int_{\domain_{\text{p},j}} k_{\text{p}}(\bm{x}, \bm{x}') \, d\bm{x} \, d\bm{x}'.
  \label{eq:cross-covariance}
\end{equation}
Each child provides an integral estimate $(\Ihat_j, \sighat_j^2)$ from its own local GP.
We treat these as noisy observations of the linear functionals \smash{$L_j[f] = \int_{\domain_{\text{p},j}} f(\bm{x}) \, d\bm{x}$}, with observation noise $\sighat_j^2$.
Define the cross-covariance between the total integral and each sub-integral as
\begin{equation}
  c_j = \int_{\domain_{\text{p}}} \int_{\domain_{\text{p},j}} k_{\text{p}}(\bm{x}, \bm{x}') \, d\bm{x} \, d\bm{x}' = \sum_{i=1}^{m} C_{ij},
  \label{eq:cross-cov-total}
\end{equation}
and let $\bm{\Sigma} = \operatorname{diag}(\sighat_1^2, \dots, \sighat_m^2)$.
Conditioning the parent GP on the children's estimates gives
\begin{equation}
  \int_{\domain_{\text{p}}} f(\bm{x}) \, d\bm{x} \;\Big|\; \left( L_j[f] \approx \Ihat_j \right)_{j=1}^m \;\sim\; \Normal\!\left( \Ihat_{\text{p}},\, \sighat_{\text{p}}^2 \right),
  \label{eq:tree-conditioning}
\end{equation}
with
\begin{align}
  \Ihat_{\text{p}} &= \bm{c}^\top (\bm{C} + \bm{\Sigma})^{-1} \hat{\bm{I}}, \label{eq:tree-cond-mean} \\
  \sighat_{\text{p}}^2 &= \int_{\domain_{\text{p}}} \int_{\domain_{\text{p}}} k_{\text{p}}(\bm{x}, \bm{x}') \, d\bm{x} \, d\bm{x}' - \bm{c}^\top (\bm{C} + \bm{\Sigma})^{-1} \bm{c}, \label{eq:tree-cond-var}
\end{align}
where $\hat{\bm{I}} = (\Ihat_1, \dots, \Ihat_m)^\top$.
This is analogous to \cref{eq:bq-mean,eq:bq-var}, with the children's integral estimates playing the role of observations and $\bm{c}$ replacing the kernel mean embedding vector $\zvec$.

If a child is itself a branch, its estimate $(\Ihat_j, \sighat_j^2)$ is obtained by applying \cref{eq:tree-cond-mean,eq:tree-cond-var} recursively.
The global integral is thus computed bottom-up: leaf estimates propagate through the tree, with each branch reintroducing cross-subdomain correlations via its retained prior.

This construction also yields a multi-scale decomposition: the parent kernel $k_{\text{p}}$ captures coarse-scale structure, while the children's kernels model finer-scale variation within each subdomain.

\subsection{Convergence Analysis}\label{sec:convergence}
We next provide some theory for tree conditioning.
The following theorem states that the child with largest error controls the error of its parent.
Note that this theorem holds trivially also for~\eqref{eq:independent-sum}.

\begin{theorem} \label{thm:tree-error}
  Suppose that $\lvert I_j - \Ihat_j \rvert \leq \varepsilon_j$ and $\sighat_j^2 \leq \delta_j^2$ for some $\varepsilon_j, \delta_j \geq 0$ and all $j \in \{1, \ldots, m\}$.
  Let $\varepsilon_\textup{max} = \max\{\varepsilon_1, \ldots, \varepsilon_m\}$ and $\delta_\textup{max} = \max\{\delta_1, \ldots, \delta_m\}$.
  Then
  \begin{equation}
    \lvert I - \Ihat_{\textup{p}} \rvert = O(\varepsilon_\textup{max}) + O(\delta_\textup{max}^2) \: \text{ and } \: \sighat_{\textup{p}}^2 = O(\delta_\textup{max}^2) .
  \end{equation}  
  as $\varepsilon_\textup{max}, \delta_\textup{max} \to 0$.
\end{theorem}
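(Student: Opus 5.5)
The key identity is that $I = \sum_{j=1}^m I_j$ and $\bm{c} = \bm{C}\bm{1}$, which is \cref{eq:cross-cov-total}. I would therefore compare the tree-conditioning formulas \cref{eq:tree-cond-mean,eq:tree-cond-var} with their noiseless counterparts $\bm{\Sigma}=\bm{0}$. In the noiseless case, if $\bm{C}$ is invertible, $\bm{c}^\top \bm{C}^{-1} = \bm{1}^\top$. The mean then returns exactly $\bm{1}^\top \hat{\bm{I}} = \sum_j \Ihat_j$. The variance becomes $\bm{1}^\top\bm{C}\bm{1} - \bm{1}^\top \bm{C}\bm{1} = 0$, because the double integral of $k_{\text{p}}$ over $\domain_{\text{p}}\times\domain_{\text{p}}$ equals $\sum_{i,j} C_{ij}$. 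I would assume throughout that $\bm{C}$ is positive definite. This is the generic situation for a strictly positive definite kernel and disjoint subdomains, since the functionals $L_j$ are then linearly independent. The whole proof is a perturbation argument around this exact case.

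For the mean, I would write $\bm{c}^\top(\bm{C}+\bm{\Sigma})^{-1} = \bm{1}^\top \bm{C}(\bm{C}+\bm{\Sigma})^{-1} = \bm{1}^\top - \bm{1}^\top \bm{\Sigma}(\bm{C}+\bm{\Sigma})^{-1}$. This gives
\[
I - \Ihat_{\text{p}} = \sum_j (I_j - \Ihat_j) + \bm{1}^\top \bm{\Sigma}(\bm{C}+\bm{\Sigma})^{-1}\hat{\bm{I}} .
\]
The first term is bounded by $m\,\varepsilon_{\max}$. For the second term, $\|\bm{\Sigma}\| \le \delta_{\max}^2$, and $\|(\bm{C}+\bm{\Sigma})^{-1}\| \le \lambda_{\min}(\bm{C})^{-1}$ because $\bm{\Sigma}\succeq 0$. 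Moreover, $\|\hat{\bm{I}}\| \le \|\bm{I}\| + \sqrt{m}\,\varepsilon_{\max}$ is bounded as $\varepsilon_{\max}\to0$, where $\bm{I}=(I_1,\dots,I_m)^\top$ is fixed. The second term is therefore $O(\delta_{\max}^2)$ with a constant depending on $m$, $\bm{C}$ and $\bm{I}$. This gives the claimed $O(\varepsilon_{\max}) + O(\delta_{\max}^2)$.

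For the variance, the same identity gives $\bm{c}^\top(\bm{C}+\bm{\Sigma})^{-1}\bm{c} = \bm{1}^\top\bm{C}\bm{1} - \bm{1}^\top\bm{\Sigma}(\bm{C}+\bm{\Sigma})^{-1}\bm{C}\bm{1}$. Hence
\[
\sighat_{\text{p}}^2 = \bm{1}^\top \bm{\Sigma}(\bm{C}+\bm{\Sigma})^{-1}\bm{C}\bm{1} .
\]
This is at most $m\,\delta_{\max}^2$, since $(\bm{C}+\bm{\Sigma})^{-1}\bm{C}$ has operator norm at most $1$: it is similar to $\bm{C}^{1/2}(\bm{C}+\bm{\Sigma})^{-1}\bm{C}^{1/2} \preceq \bm{I}$. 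Alternatively, $\lambda_{\min}(\bm{C})^{-1}\|\bm{C}\|$ suffices as a constant.

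The main obstacle is the invertibility of $\bm{C}$. If $\bm{C}$ were singular, $\bm{c}^\top(\bm{C}+\bm{\Sigma})^{-1}$ need not approach $\bm{1}^\top$ uniformly, and the constants above would blow up. I would handle this by stating the assumption explicitly and justifying it from strict positive definiteness of $k_{\text{p}}$ (Matérn or Wendland tensor kernels) together with the disjointness and positive measure of the $\domain_{\text{p},j}$. If needed, I would fall back on the pseudo-inverse argument on the range of $\bm{C}$, which still contains $\bm{c} = \bm{C}\bm{1}$.
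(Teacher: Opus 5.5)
Your proof is correct, and it rests on the same key fact as the paper's: $\bm{c} = \bm{C}\bm{1}$, i.e.\ $\bm{C}^{-1}\bm{c} = \bm{1}$, combined with a perturbation around the noiseless case $\bm{\Sigma} = \bm{0}$. The two proofs differ in how they control that perturbation.

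The paper expands $(\bm{C}+\bm{\Sigma})^{-1} = (\bm{I}+\bm{C}^{-1}\bm{\Sigma})^{-1}\bm{C}^{-1}$ as a Neumann series. This requires $\delta_{\max}$ small enough for the series to converge. It also leaves several things implicit: the constants, the boundedness of $\hat{\bm{I}}$, the invertibility of $\bm{C}$, and the variance case.

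You instead use the exact identity $\bm{C}(\bm{C}+\bm{\Sigma})^{-1} = \bm{I} - \bm{\Sigma}(\bm{C}+\bm{\Sigma})^{-1}$. It gives closed forms valid for every $\bm{\Sigma}\succeq 0$: $I - \Ihat_{\text{p}} = \sum_j (I_j - \Ihat_j) + \bm{1}^\top\bm{\Sigma}(\bm{C}+\bm{\Sigma})^{-1}\hat{\bm{I}}$ and $\sighat_{\text{p}}^2 = \bm{1}^\top\bm{\Sigma}(\bm{C}+\bm{\Sigma})^{-1}\bm{C}\bm{1}$. So you obtain non-asymptotic bounds with explicit constants, not just $O(\cdot)$ statements, and you state the assumptions the paper leaves implicit.

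One justification in your variance step is wrong, although your fallback rescues it. You argue that $(\bm{C}+\bm{\Sigma})^{-1}\bm{C}$ is similar to $\bm{C}^{1/2}(\bm{C}+\bm{\Sigma})^{-1}\bm{C}^{1/2} \preceq \bm{I}$. That only bounds its spectral radius by $1$, not its operator norm, because the matrix is not normal. For example, $C_{11} = C_{22} = 1$, $C_{12} = 0.99$ and $\bm{\Sigma} = \operatorname{diag}(1,0)$ give an operator norm of about $1.39$. Your alternative constant $\lambda_{\min}(\bm{C})^{-1}\lVert\bm{C}\rVert$ is valid, so the conclusion stands.

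The sharper bound $m\,\delta_{\max}^2$ you wanted is nevertheless true, for a different reason. Note that $\bm{\Sigma}(\bm{C}+\bm{\Sigma})^{-1}\bm{C} = \bm{\Sigma} - \bm{\Sigma}(\bm{C}+\bm{\Sigma})^{-1}\bm{\Sigma} \preceq \bm{\Sigma}$. Hence $\sighat_{\text{p}}^2 \le \sum_j \sighat_j^2 \le m\,\delta_{\max}^2$, with no dependence on $\lambda_{\min}(\bm{C})$.
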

\begin{proof}
  Expanding $(\bm{C} + \bm{\Sigma})^{-1} = (\bm{I} + \bm{C}^{-1} \bm{\Sigma})^{-1} \bm{C}^{-1}$ as a Neumann series gives
  \begin{equation}
    \Ihat_{\text{p}} = \bm{c}^\top \bm{C}^{-1} \hat{\bm{I}} + \sum_{k=1}^\infty \bm{c}^\top (- \bm{C}^{-1} \bm{\Sigma})^k \bm{C}^{-1} \hat{\bm{I}} .
  \end{equation}
  The series converges when $\delta_\textup{max}$ is sufficiently small since $\bm{\Sigma} = \operatorname{diag}(\sighat_1^2, \dots, \sighat_m^2)$.
  Eqs.\@~\eqref{eq:cross-covariance} and~\eqref{eq:cross-cov-total} yield $\bm{C}^{-1} \bm{c} = (1, \ldots, 1)^\top$.
  Therefore $\Ihat_{\text{p}} = \sum_{i=1}^m \Ihat_i + O(\delta_\textup{max}^2)$, so that  
  \begin{equation}
    \lvert I - \Ihat_{\text{p}} \rvert \leq \sum_{j=1}^m \lvert I_j - \Ihat_j \rvert + O(\delta_\textup{max}^2),
  \end{equation}
  which gives the claim for $\lvert I - \Ihat_{\text{p}} \rvert$.
  The proof for $\sighat_{\text{p}}^2$ is analogous.
\end{proof}

Convergence and variance contraction rates for BQ are well-understood when the kernel is a stationary Matérn kernel of order $\nu$~\citep{Kanagawa2019, Kanagawa2020, mahsereci2026}.
Here we consider stationary Matérns rather than tensor product Matérns because convergence results for the latter are more complicated~\citep[Sec.\@~5.1.2]{mahsereci2026}.
Consider the posterior for standard BQ given in Eqs.\@~\eqref{eq:bq-posterior}--\eqref{eq:bq-var} and let $\eta > 0$.
If $\Omega$ is bounded and its boundary is sufficiently regular (e.g., $\Omega$ is a rectangle) and $f$ is an element of $H^{\eta + d/2}(\Omega)$, the Sobolev space of order $\eta + d/2$ on $\Omega$, one can prove that
\begin{equation}
  \lvert I - \Ihat \rvert = O(N^{-\min\{\eta, \nu\}/d - 1/2})
\end{equation}
and $\sighat^2 = O(N^{-2\nu/d - 1})$ provided that the locations $\bm{X} = ( \bm{x}_1, \dots, \bm{x}_N )$ are quasi-uniform~\citep[Thm.\@~5.6]{mahsereci2026}.
Quasi-uniformity means that the separation radius and fill-distance,
\begin{equation*}
  q_N = \frac{1}{2} \min_{1 \leq i \neq j \leq N} \lVert \bm{x}_i - \bm{x}_j \rVert \: \text{ and } \: h_N = \sup_{\bm{x} \in \Omega} \min_{\bm{x}_i \in \bm{X}} \lVert \bm{x} - \bm{x}_i \rVert,
\end{equation*}
satisfy $q_N = \Theta(h_N)$ as $N \to \infty$.
Separation radius measures the smallest distance between any two integration locations while the fill-distance is the radius of the largest ball in $\Omega$ that contains none of $\bm{x}_i$.
Applying these asymptotics to the children gives the following corollary of \cref{thm:tree-error}.

\begin{corollary}
  Suppose that each $\Omega_j \subset \mathbb{R}^d$ is a rectangle and that a stationary Matérn kernel of order $\nu_j$ and $N_j$ quasi-uniform observation locations are used to obtain~\eqref{eq:leaf-integral-estimate}.
  If $f|_{\Omega_j} \in H^{\eta_j + d/2}(\Omega_j)$ for $j \in \{1, \ldots, m\}$, then
  \begin{equation*}    
    \lvert I - \Ihat_{\textup{p}} \rvert = O\big( \max\{ N_j^{-\min\{\eta_j, \nu_j\}/d - 1/2} \}_{j=1}^m \big)
  \end{equation*}
  and 
  \begin{equation*}
    \sighat_\textup{p}^2 = O\big( \max\{ N_j^{-2\nu_j/d - 1} \}_{j=1}^m \big) 
  \end{equation*}
  as $N_1, \ldots, N_m \to \infty$.
\end{corollary}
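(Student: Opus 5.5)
The corollary follows by feeding the standard BQ rates into \cref{thm:tree-error}. For each child $j$, the local GP is a stationary Matérn of order $\nu_j$ on the rectangle $\Omega_j$, with $N_j$ quasi-uniform points, and $f|_{\Omega_j} \in H^{\eta_j + d/2}(\Omega_j)$. These are exactly the hypotheses of the cited result \citep[Thm.\@~5.6]{mahsereci2026}, applied with $\Omega_j$ in place of $\Omega$. A rectangle is bounded with Lipschitz boundary, so the regularity requirement holds. We therefore obtain constants $A_j, B_j$, depending on $\Omega_j$, $\nu_j$, $\eta_j$, the quasi-uniformity constant and $\|f|_{\Omega_j}\|$, such that for $N_j$ large
\begin{equation*}
  \lvert I_j - \Ihat_j \rvert \leq A_j N_j^{-\min\{\eta_j,\nu_j\}/d - 1/2} \eqqcolon \varepsilon_j, \qquad \sighat_j^2 \leq B_j N_j^{-2\nu_j/d - 1} \eqqcolon \delta_j^2 .
\end{equation*}
As $N_1,\dots,N_m \to \infty$, both $\varepsilon_\textup{max}$ and $\delta_\textup{max}$ tend to $0$, so \cref{thm:tree-error} applies.

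\cref{thm:tree-error} gives $\sighat_\textup{p}^2 = O(\delta_\textup{max}^2) = O(\max_j N_j^{-2\nu_j/d-1})$, which is the variance claim because $m$ is fixed and the constants $B_j$ can be absorbed. For the error, the theorem gives $\lvert I - \Ihat_\textup{p}\rvert = O(\varepsilon_\textup{max}) + O(\delta_\textup{max}^2)$. It remains to show that the $\delta_\textup{max}^2$ term is dominated by the claimed rate. Since $\min\{\eta_j,\nu_j\} \leq \nu_j$, we have
\begin{equation*}
  N_j^{-2\nu_j/d-1} \leq N_j^{-\min\{\eta_j,\nu_j\}/d-1/2}
\end{equation*}
for $N_j \geq 1$; indeed the exponent on the left is at least twice the one on the right. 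Hence $\delta_\textup{max}^2 = O(\max_j N_j^{-\min\{\eta_j,\nu_j\}/d-1/2})$, and both terms combine into the stated bound.

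The main point to check is that the $O(\cdot)$ constants in \cref{thm:tree-error} do not themselves depend on the $N_j$. They come from the Neumann series around the parent matrix $\bm C$ and from $\bm{C}^{-1}\bm c = (1,\dots,1)^\top$. These quantities depend only on the parent kernel $k_\textup{p}$ and the fixed partition, not on the children's data. So as long as the parent prior is held fixed while the $N_j$ grow, and $\bm C$ is invertible (true for a positive definite kernel and disjoint subdomains of positive measure), the constants are uniform and the corollary follows.
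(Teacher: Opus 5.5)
Your proposal is correct and takes the same approach as the paper, which obtains the corollary by applying the quasi-uniform Matérn BQ rates of \citet[Thm.~5.6]{mahsereci2026} to each child and feeding them into \cref{thm:tree-error}. Your explicit check that the $O(\delta_\textup{max}^2)$ term is absorbed into the error rate, because $2\nu_j/d + 1 \geq 2(\min\{\eta_j,\nu_j\}/d + 1/2)$, fills in a step the paper leaves implicit.
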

This corollary states that the convergence of HBQ is controlled by the smallest local smoothness of $f$ (the smallest $\eta_j$) or the roughest local Matérn model (the smallest $\nu_j$).

\section{Adaptive Tree Construction}\label{sec:adaptive-alg}
In practice, the hierarchy in \cref{sec:method} is not known a priori and must be determined from the integrand evaluations.
We grow the partition adaptively: posterior uncertainty guides us where to invest more integrand evaluations, while model selection techniques decide when to partition further.

\subsection{Splitting Criterion}\label{sec:splitting}
We use model selection to decide when a single stationary GP is misspecified and should be replaced by GPs on smaller subdomains.

\textbf{When to split?}
Let $f$ be a GP on the domain $\domain = [a, b]$ conditioned on evaluation points $\bm{X}$.
Select a point $x_{\text{split}} \in \bm{X} \setminus \{ a, b \}$.
We obtain two independent GPs: $f_1$ on the domain $\Omega_1 = [a, x_{\text{split}}]$ conditioned on evaluation points $\bm{X}_1 = \bm{X} \cap [a, x_{\text{split}}]$, and $f_2$ on the domain $\Omega_2 = [x_{\text{split}}, b]$ conditioned on evaluation points $\bm{X}_2 = \bm{X} \cap [x_{\text{split}}, b]$.
We can evaluate the Bayes factor of the two models:
\begin{equation}
  \text{BF}_{\text{split}} = \frac{p(\bm{y} \mid f_1, f_2)}{p(\bm{y} \mid f)} = \frac{p(\bm{y}_1 \mid f_1) p(\bm{y}_2 \mid f_2)}{p(\bm{y} \mid f)}
\end{equation}
Since the Bayes factor requires intractable marginalization over hyperparameters, we approximate it with the Bayesian Information Criterion (BIC) \citep{schwarz1978}, which penalizes model complexity using the profile marginal likelihood from \cref{sec:method-hyperparams}: 
Let $\mathcal{L}_{\text{parent}}$ and $\mathcal{L}_{\text{split}}$ denote the maximized profile log-likelihoods of the single-GP and split models, respectively.
We accept a split if
\begin{equation}
  \mathcal{L}_{\text{split}} - \mathcal{L}_{\text{parent}} > \frac{\Delta k}{2} \log N,
  \label{eq:bic-criterion}
\end{equation}
where $N$ is the number of evaluation points and $\Delta k$ is the number of additional parameters introduced by the split.

With a shared lengthscale, the single-GP model has parameters $(\outscale, \lscale)$; the split model has two such pairs plus the split location, giving $\Delta k = 3$.
With per-dimension lengthscales (ARD), $\Delta k = d + 2$.
The penalty grows with $N$, preventing over-fragmentation as the evaluation budget increases.

To further guard against pathological splits, we require each child to contain at least $5^d$ evaluation points.
This prevents the creation of leaves with too few points for reliable hyperparameter estimation.

\textbf{Where to split?}
Given the criterion in \cref{eq:bic-criterion}, it remains to select the split location.
We restrict candidates to existing evaluation points and perform a grid search along each coordinate axis.
For each candidate, we fit local hyperparameters to both children and evaluate the combined log-likelihood $\mathcal{L}_{\text{split}}$.
We select the axis and location that maximize $\mathcal{L}_{\text{split}}$, accepting the split only if it satisfies \cref{eq:bic-criterion}.

\subsection{Hyperparameter Fitting}\label{sec:method-hyperparams}
Each leaf's kernel~\eqref{eq:kernel} has hyperparameters $(\outscale, \lscale, \nu)$.
The output scale $\outscale$ is concentrated out analytically, leaving $\lscale$ as the sole continuous parameter to optimize via the profile marginal likelihood.
We select the smoothness $\nu$ by a discrete search over a set of candidate Matérn orders, optimizing $\lscale$ for each and retaining the best pair.
This allows different subdomains to use different kernel smoothness, matching local regularity.

Adaptive splitting means that leaves frequently have few evaluation points, causing the profile likelihood for $\lscale$ to plateau near zero.
We regularize with a log-normal prior on $\lscale$ anchored at the parent's fitted lengthscale:
\begin{equation}
  \log \lscale_{\text{child}} \sim \Normal(\log \bar{\lscale}_{\text{parent}},\, \sigma^2_{\text{prior}}),
  \label{eq:child-lscale-prior}
\end{equation}
where $\bar{\lscale}$ is the geometric mean of the parent's per-dimension lengthscales in the ARD setting.
Hyperparameters are re-estimated whenever a leaf is refined.
\cref{app:hyperparams} gives further details, including the root node prior and candidate sets.

\subsection{Adaptive Refinement}\label{sec:adaptive}
Adaptively growing the partition is a sequential decision making problem:
At each step, we can take one of two actions --- split a leaf to reduce model misspecification, or add a batch of integrand evaluations to a leaf.

\textbf{Integral variance reduction.}
Intuitively, at each step we want to act on the leaf that would most improve our integral estimate.
Let $\sighat^2$ denote the current posterior variance of the global integral estimate, and let $\sighat^2_{k+}$ denote the hypothetical variance obtained by adding a batch of $M_k$ integrand evaluations to leaf $k$.
Here $M_k$ may depend on the leaf --- for instance, inserting midpoints into an existing grid of $N_k$ points adds $N_k - 1$ new evaluations.
We score each leaf by its expected variance reduction per evaluation,
\begin{equation}
  s_k = \frac{\sighat^2 - \sighat^2_{k+}}{M_k},
  \label{eq:leaf-score}
\end{equation}
and select the highest-scoring leaf $k^* = \arg\max_k s_k$.
This is a batched variant of the integral variance reduction (IVR) criterion used in active BQ \citep{osborne2012}, extended to the hierarchical setting.
The hypothetical variance $\sighat^2_{k+}$ is computed cheaply by re-evaluating the BQ posterior variance formula \cref{eq:bq-var} under the current kernel with the augmented point set, without re-fitting hyperparameters or evaluating $f$.
The cost normalization by $M_k$ prevents bias toward leaves with large grids.

\textbf{Split or refine.}
Given target leaf $k^*$, we attempt a BIC-guided split (\cref{sec:splitting}) and recursively try to split the resulting children until no further splits are accepted. If the initial split is rejected, we refine by adding evaluation points and re-fitting hyperparameters.
The loop repeats until the budget is exhausted or $\sighat$ falls below a tolerance (\cref{alg:hbq}).

\begin{algorithm}[t]
  \caption{Hierarchical Bayesian Quadrature}\label{alg:hbq}
  \begin{algorithmic}[1]
    \REQUIRE integrand $f$, domain $\domain$, budget $N_{\max}$, tolerance~$\tau$
    \ENSURE integral estimate $\Ihat \pm \sighat$
    \STATE Initialize root node with GP fitted on initial evaluations
    \STATE $\Ihat, \sighat \leftarrow \textsc{BQ}(\text{root})$
    \WHILE{$N_{\text{used}} < N_{\max}$ \AND $\sighat > \tau$}
      \STATE Select leaf $k^*$ with largest expected variance reduction
      \IF{\textsc{ShouldSplit}$(k^*)$}
        \STATE Split $k^*$: create children, fit local GPs
        \STATE Recursively split children while accepted
      \ELSE
        \STATE Refine $k^*$: add evaluation points, refit hyperparameters
      \ENDIF
      \STATE $\Ihat, \sighat \leftarrow \textsc{Recombine}(\text{root})$
    \ENDWHILE
  \end{algorithmic}
\end{algorithm}

\section{Related Work}\label{sec:related}

\textbf{Partitioned Gaussian processes.}
Fitting local stationary GPs on subdomains is a well-established strategy for handling nonstationarity in regression.
Treed GP models \citep{gramacy2008} recursively partition the input space and fit independent GPs per leaf, with the tree structure learned via reversible-jump MCMC.
Because the leaf GPs are independent, predictions at subdomain boundaries are discontinuous.
Several aggregation schemes address this, e.g.\ product-of-experts \citep{tresp2000,deisenroth2015} and boundary pseudo-observations \citep{park2018}.
Nested kriging \citep{rulliere2018} aggregates local predictions through a second-level GP --- the regression analogue of our tree conditioning.
Most related to our splitting criterion, \citet{hinne2022bndd} also use BIC to choose between a merged GP and a split GP with a discontinuity, albeit for regression and discontinuity design rather than integration.
None of these approaches have been applied to integration, where the recombination mechanism directly affects integral uncertainty and where kernel mean embeddings impose additional structure.

\textbf{Local Bayesian optimization.}
Trust-region methods such as TuRBO~\citep{eriksson2019turbo} also fit local stationary GPs, but for optimization: acquisition functions bias evaluations toward extrema and abandon regions of low expected improvement.
For quadrature every subdomain contributes additively---small-amplitude regions still matter through their volume---so HBQ instead keeps a global partition and recombines the local integral estimates into a single posterior over the integral, with no analogue of TuRBO's trust-region shrinkage.
The BQ-specific contribution is thus not local stationary models per se, but their combination with closed-form subdomain integration and hierarchical recombination of integral functionals.

\textbf{Bayesian quadrature.}
Bayesian quadrature \citep{ohagan1991,mahsereci2026} places a GP prior on the integrand and derives a posterior distribution over the integral.
Active point selection via integral variance reduction \citep{osborne2012}, Frank--Wolfe optimization \citep{briol2015}, and warped models for non-negative integrands \citep{gunter2014} have substantially improved practical performance.
All of these methods use a single global GP, which limits their ability to adapt to spatially varying integrand complexity.
BART-Int \citep{zhu2020} replaces the GP with a Bayesian additive regression tree prior, whose sum-of-trees structure implicitly partitions the domain.
However, BART-Int's piecewise-constant tree basis limits its effectiveness for smooth integrands, where GP-based BQ retains an advantage.
Multilevel BQ \citep{li2023} introduces hierarchy, but across fidelity levels rather than spatial subdomains.
Our approach combines domain partitioning with GP-based BQ in each subdomain, preserving the smoothness advantages of GP priors while adapting to nonstationarity through the tree structure.

\textbf{Adaptive cubature.}
The Genz--Malik algorithm \citep{genz1980,berntsen1991} is the gold standard adaptive cubature method.
Like HBQ, it partitions the domain into axis-aligned hyperrectangles via binary splits, concentrating effort where the integrand is most difficult.
HBQ can be viewed as its Bayesian counterpart: we replace Genz--Malik's embedded polynomial rules with local GP surrogates, gaining principled uncertainty quantification, and split only when the local model is detectably misspecified (\cref{sec:splitting}), refining the existing GP otherwise.

\textbf{Model evidence computation.}
When the integral is a model evidence, sampling-based estimators such as Warp-III bridge sampling~\citep{gronau2019warp3} and the truncated harmonic-mean estimator THAMES~\citep{metodiev2025thames} are also widely used.
These require posterior samples rather than pointwise black-box evaluations and do not quantify integral uncertainty, but they are effective complementary tools for model comparison.

\begin{figure*}[t]
  \centering
  \includegraphics[width=1.0\textwidth]{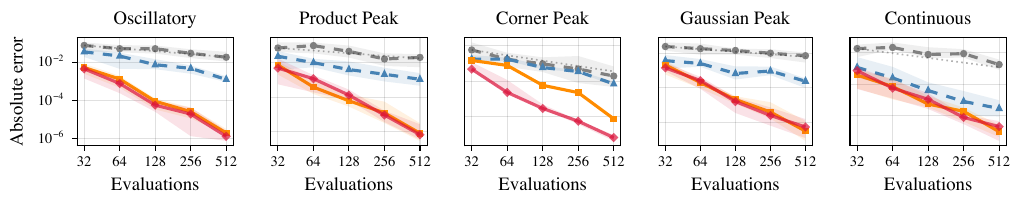}
  \caption{
    \textbf{Convergence on the Genz test function suite} ($d = 2$, difficulty $C = 9$, $20$ trials).
    Lines show median absolute error; shaded regions indicate the interquartile range.
    The dotted line shows the $N^{-1/2}$ Monte Carlo reference rate.
    \textcolor[HTML]{DC143C}{HBQ} matches \textcolor[HTML]{FF8C00}{BQ} on four families and substantially outperforms it on Corner Peak, where spatially varying complexity rewards adaptive partitioning.
    \textcolor[HTML]{808080}{MC} and \textcolor[HTML]{4682B4}{QMC} shown for reference.
  }
  \label{fig:genz}
\end{figure*}

\section{Experiments}\label{sec:experiments}
We provide a Julia implementation of our method.\footnote{\url{https://github.com/timweiland/HierarchicalBayesQuad.jl}}
Refer to \cref{app:experiments} for further details on all experiments.

\subsection{Genz Benchmark Functions}\label{sec:genz}

We evaluate HBQ on the standard Genz test function suite \citep{genz1984}, a collection of parameterized integrand families over $[0,1]^d$, each isolating a specific challenge for quadrature methods. We use five of the six families: Oscillatory (high-frequency oscillations), Product Peak (localized peak with product structure), Corner Peak (concentration near the origin), Gaussian Peak (localized Gaussian bump), and Continuous ($C^0$ kink).
Each family is parameterized by vectors $\bm{a}, \bm{u} \in \mathbb{R}^d$ controlling difficulty and location, with analytic integrals available for all families.

\textbf{Setup.}
We test in $d = 2$ with difficulty $C = \sum_i a_i = 9$, drawing $\bm{a}$ and $\bm{u}$ randomly for each of $20$ independent trials.
We compare four methods at evaluation budgets $N \in \{32, 64, 128, 256, 512\}$:
Monte Carlo (MC), quasi-Monte Carlo with Sobol sequences (QMC), standard BQ with a single stationary GP, and HBQ.
Both BQ and HBQ select the kernel smoothness automatically from $\{\text{Mat\'ern-}1/2, \text{Mat\'ern-}3/2, \text{Mat\'ern-}5/2\}$ via the profile marginal likelihood, ensuring a fair comparison.
BQ uses Sobol points with no adaptive refinement; HBQ uses the full adaptive loop of \cref{alg:hbq}.

\textbf{Results.}
\Cref{fig:genz} shows median absolute error with interquartile bands.
HBQ substantially outperforms BQ on Corner Peak, where the integrand concentrates sharply near the origin and flattens elsewhere, achieving up to $6\times$ lower error.
HBQ's adaptive partitioning concentrates evaluations in the region of high complexity rather than spreading them uniformly.
On the remaining four families, HBQ performs comparably to BQ, confirming that the adaptive overhead does not degrade performance when the integrand is well-described by a single stationary GP.

\subsection{Model Evidence for an Epidemiological Model}\label{sec:sir}

We return to the motivating example from \cref{fig:sir-motivation}: computing the model evidence $Z = \int p(y \mid \beta, \gamma)\, p(\beta, \gamma)\, d\beta\, d\gamma$ for a susceptible-infected-recovered (SIR) model with two parameters, infection rate $\beta$ and recovery rate $\gamma$.
Each likelihood evaluation requires solving an ODE system, making function evaluations expensive.
We use a small-population setting ($N_{\text{pop}} = 200$) with two weekly incidence observations under Poisson noise, producing a broad but sharply curved ridge in the likelihood surface (details in the appendix).

\Cref{fig:sir-motivation}(c) compares convergence of four methods using a ground truth computed via adaptive cubature.
HBQ with independent summation achieves sub-1\% relative error at $N = 100$ evaluations, while BQ, MC, and QMC remain above 1\%.
Panel~(b) shows how HBQ concentrates its evaluation budget: at $N = 150$, the tree allocates five leaves with the densest coverage along the likelihood ridge, while the flat regions receive only coarse coverage.

\subsection{Structure Recovery and Calibration}\label{sec:calibration}

We now test whether HBQ can recover known nonstationary structure from data.
We sample integrands from a piecewise Mat\'ern GP on $[0,1]$ with three segments---rough edges (Mat\'ern-$1/2$, $\lscale = 0.02$) flanking a smooth center (Mat\'ern-$5/2$, $\lscale = 0.1$)---so that the true partition, kernel smoothness, and integral are all known exactly.
Since the integrand comes from a piecewise GP, this is an on-model test: any errors are due to the method, not model misspecification.
We run 200 trials with $N = 400$ evaluations each.

\textbf{Structure recovery.}
HBQ recovers the correct number of segments in 100\% of trials and selects the correct kernel orders in 96\%.
\Cref{fig:calibration-fits} shows a representative trial: BQ selects the roughest available kernel to accommodate the rough edges, but cannot adapt to local smoothness, leaving it overly uncertain in the smooth center.
HBQ identifies the piecewise structure and fits each region with an appropriate kernel, yielding $3.3{\times}$ tighter uncertainty.

\textbf{Calibration.}
\Cref{fig:calibration-qq} shows that this tighter uncertainty is earned: $z$-scores for both methods track $\mathcal{N}(0,1)$ well (BQ: 92\% coverage of a nominal 95\% CI, $\sigma_z = 1.13$; HBQ: 90\%, $\sigma_z = 1.53$).\footnote{One of the 200 BQ trials produced a degenerate posterior with $\hat{\sigma} = 0$, i.e.\ an infinite $z$-score, and is excluded from BQ's $\sigma_z$.}
HBQ has four outliers with $|z| > 4$, all caused by imprecise split placement at the sharp smoothness transitions in this synthetic problem; when splits land correctly, HBQ is well calibrated ($\sigma_z \approx 1$).

\begin{figure}[t]
  \centering
  \includegraphics[width=\columnwidth]{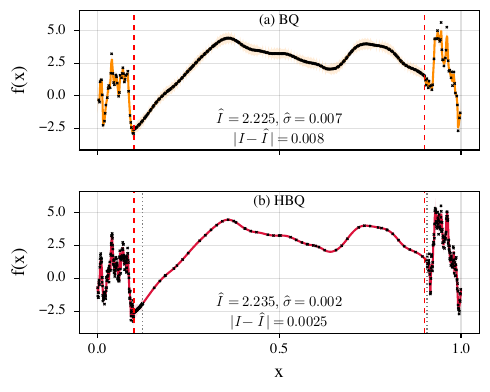}
  \caption{
    \textbf{GP posterior fits on a piecewise integrand.}
    The integrand is sampled from a piecewise Mat\'ern GP with narrow rough edges (Mat\'ern-$1/2$, $\lscale = 0.02$) and a wide smooth center (Mat\'ern-$5/2$, $\lscale = 0.1$); true boundaries are shown as dashed red lines.
    \textbf{(a)}~BQ fits a single stationary GP.
    \textbf{(b)}~HBQ adaptively partitions the domain and recovers the piecewise structure (dotted gray lines), achieving $3.3{\times}$ tighter uncertainty.
  }
  \label{fig:calibration-fits}
\end{figure}

\begin{figure}[t]
  \centering
  \includegraphics[width=\columnwidth]{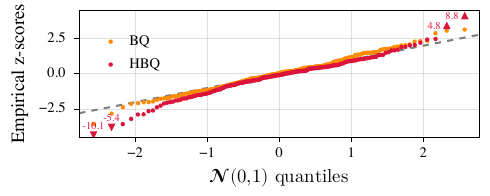}
  \caption{
    \textbf{QQ plot of $z$-scores} against $\mathcal{N}(0,1)$ over 200 trials on the piecewise integrand.
    Both BQ (orange) and HBQ (red) track the diagonal well.
    HBQ has four outliers (triangles) attributable to imprecise split placement at sharp smoothness transitions.
  }
  \label{fig:calibration-qq}
\end{figure}

\subsection{Reaction--Diffusion PDE}\label{sec:pde}

We test HBQ on integrands arising from a one-dimensional reaction--diffusion PDE.
The diffusivity follows a sigmoid step, modelling a composite material with a smooth, slowly varying region and a spiky, rapidly varying region.
Random Gaussian bump sources and point sources produce sharp local features in the low-diffusivity region.
This creates integrands with genuine multi-scale character: a single stationary GP cannot simultaneously capture the smooth and spiky regimes without either oversmoothing or overfitting.

\textbf{Setup.}
We draw 20 independent PDE instances with randomized source configurations and compare MC, QMC, BQ, and HBQ at budgets $N \in \{8, 16, 32, 64, 128, 256, 512\}$.
Ground-truth integrals are computed via the trapezoidal rule on a fine grid ($N_{\text{fine}} = 8000$).

\textbf{Results.}
\Cref{fig:pde} shows median absolute error with interquartile bands.
All methods perform similarly at small budgets ($N \leq 32$), but HBQ pulls away sharply from $N = 64$ onward.
At $N = 512$, HBQ achieves a median error of $6.7 \times 10^{-6}$, roughly $35{\times}$ lower than BQ ($2.3 \times 10^{-4}$).
HBQ automatically allocates different lengthscales and point densities to the smooth and spiky regions.

\begin{figure}[t]
  \centering
  \includegraphics[width=\columnwidth]{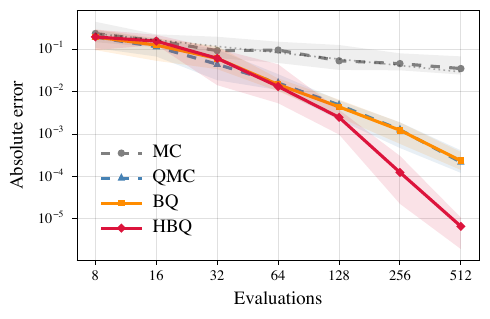}
  \caption{
    \textbf{Convergence on 1D reaction--diffusion PDE integrands.}
    HBQ separates from BQ at $N = 64$ and reaches $35{\times}$ lower error at $N = 512$.
  }
  \label{fig:pde}
\end{figure}

\section{Discussion and Conclusion}\label{sec:conclusion}

We presented Hierarchical Bayesian Quadrature, a method that addresses nonstationarity in BQ by adaptively partitioning the domain into subdomains with local stationary GPs.
A hierarchy of GP conditioning recombines the local integral estimates while preserving cross-subdomain correlations.
Model selection via BIC controls tree growth, preventing unnecessary partitioning: on stationary integrands, HBQ recovers the performance of standard BQ.
On nonstationary integrands, HBQ achieves substantial gains by concentrating evaluations where the integrand is most complex.

Our convergence analysis shows that the global error is controlled by the worst-case local smoothness (\cref{thm:tree-error}), providing a theoretical foundation for the adaptive strategy.
The structure recovery experiment confirms that HBQ can identify piecewise-stationary structure from data and produce well-calibrated uncertainty estimates.

HBQ is closely related to Genz--Malik adaptive cubature: both partition the domain into axis-aligned hyperrectangles and adaptively subdivide.
HBQ replaces polynomial quadrature rules with GP surrogates, gaining principled uncertainty quantification and data-driven split decisions.

\subsection{Scope and Limitations}\label{sec:limitations}
HBQ targets expensive, low-to-moderate-dimensional integration whose integrand has spatially varying behavior---for example model evidences over peaked or ridged likelihoods, or simulator and PDE responses with multi-scale features.
This is the regime where quadrature generally outperforms Monte Carlo, and where adaptive subdivision has been the standard response to nonstationarity since Genz--Malik.
Because splitting is BIC-gated, HBQ reduces to standard BQ when a single stationary model suffices, so its gains concentrate on genuinely nonstationary integrands---as the flat performance on four of the five Genz families shows.

It extends to higher dimensions through tensor-product kernels and the $d$-dependent penalty, and a $d = 3$ benchmark (\cref{app:higherd}) confirms the advantage persists beyond our two-dimensional experiments, though very high dimensions remain hard as axis-aligned partitioning and the $5^d$ per-leaf requirement scale unfavourably.
The main structural limitation is the restriction to axis-aligned hyperrectangles and closed-form-embedding kernels, which is what keeps recombination closed-form; relaxing it would require numerical mean-embedding approximation.

Finally, the convergence analysis is deliberately partial---conditional on the final tree and kernels, not yet covering tree construction or hyperparameter adaptation---which we leave, together with more flexible partitions, to future work.

\begin{contributions} 
    Tim Weiland developed the method, implemented the software, designed and ran the experiments, and led the writing.
    Toni Karvonen proposed the initial project idea, contributed the convergence analysis and its theoretical framing, and co-supervised the work.
    Philipp Hennig supervised the project and contributed to its conceptual development and presentation.
\end{contributions}

\begin{acknowledgements} 
TW and PH gratefully acknowledge co-funding by the European Union (ERC, ANUBIS, 101123955).
Views and opinions expressed are however those of the author(s) only and do not necessarily reflect those of the European Union or the European Research Council.
Neither the European Union nor the granting authority can be held responsible for them.
PH is supported by the DFG through Project HE 7114/6-1 in SPP2298/2.
PH is a member of the Machine Learning Cluster of Excellence, funded by the Deutsche Forschungsgemeinschaft (DFG, German Research Foundation) under Germany’s Excellence Strategy – EXC number 2064/1 – Project number 390727645.
TW and PH also gratefully acknowledge the German Federal Ministry of Education and Research (BMBF) through the Tübingen AI Center (FKZ:01IS18039A); and funds from the Ministry of Science, Research and Arts of the State of Baden-Württemberg.
The authors thank the International Max Planck Research School for Intelligent Systems (IMPRS-IS) for supporting TW.
TK was supported by Research Council of Finland projects 359183 (``Flagship of Advanced Mathematics for Sensing, Imaging and Modelling'') and 368086 (``Inference and approximation under misspecification''), and acknowledges the research environment provided by ELLIS Institute Finland.
This research was supported by the Research Council of Finland and the German Academic Exchange Service (DAAD, Projekt 57763386, using funds of the BMFTR) in the form of the mobility project ``Probabilistic numerics under non-Stationarity''.
\end{acknowledgements}

\bibliography{main}

\newpage

\onecolumn

\title{Hierarchical Bayesian Quadrature\\(Supplementary Material)}
\maketitle

\appendix

\section{Experimental Details}\label{app:experiments}

\subsection{Genz Benchmark Functions}\label{app:genz}

The Genz test suite \citep{genz1984} consists of six parameterized integrand families over $[0,1]^d$.
We use the five families that are continuous (omitting Discontinuous, which requires $d \geq 2$ and has a jump discontinuity):
\begin{align*}
  \text{Oscillatory:}    &\quad f(\xvec) = \cos\bigl(2\pi u_1 + \textstyle\sum_i a_i x_i\bigr), \\
  \text{Product Peak:}   &\quad f(\xvec) = \textstyle\prod_i \bigl(a_i^{-2} + (x_i - u_i)^2\bigr)^{-1}, \\
  \text{Corner Peak:}    &\quad f(\xvec) = \bigl(1 + \textstyle\sum_i a_i x_i\bigr)^{-(d+1)}, \\
  \text{Gaussian Peak:}  &\quad f(\xvec) = \exp\bigl(-\textstyle\sum_i a_i^2 (x_i - u_i)^2\bigr), \\
  \text{Continuous:}     &\quad f(\xvec) = \exp\bigl(-\textstyle\sum_i a_i |x_i - u_i|\bigr).
\end{align*}
The vectors $\bm{a} \in \R^d$ and $\bm{u} \in [0,1]^d$ control difficulty and location, respectively.
Analytic integrals are available for all families.

\textbf{Parameter generation.}
For each of $20$ trials, we sample $\bm{u} \sim \mathrm{Unif}([0,1]^d)$ and draw $\bm{a}$ by sampling a weight vector $\bm{w} \sim \mathrm{Unif}([0,1]^d)$ and setting $a_i = C \cdot w_i / \sum_j w_j$, so that $\sum_i a_i = C$.
We use difficulty $C = 9$.
This randomization over parameters (rather than fixing $\bm{a}$ and $\bm{u}$) tests robustness across a range of integrand configurations.

\textbf{Method configuration.}
We test in $d = 2$ with evaluation budgets $N \in \{32, 64, 128, 256, 512\}$.
\begin{itemize}
  \item \textbf{MC}: uniform random points, independent random seed per trial/budget.
  \item \textbf{QMC}: deterministic Sobol sequence (same points across trials for a given $N$).
  \item \textbf{BQ}: single stationary GP over $[0,1]^2$ with Sobol initial points. Kernel smoothness selected from $\{\text{Mat\'ern-}1/2, 3/2, 5/2\}$ by profile marginal likelihood. No adaptive refinement (points fixed at initialization).
  \item \textbf{HBQ}: full adaptive loop (\cref{alg:hbq}) with $5$ initial points per dimension ($25$ total), Sobol refinement adding $20$ points per step, tree conditioning, BIC splitting criterion with midpoint splits, and the same kernel candidates. The ARD lengthscale mode is used, with a log-normal child prior anchored at the parent's fitted lengthscale.
\end{itemize}
Error is measured as $|I - \hat{I}|$ using the analytic ground truth.

\subsection{SIR Model Evidence}\label{app:sir}

\textbf{SIR model.}
We use the standard deterministic SIR (susceptible--infected--recovered) model:
\begin{align*}
  \frac{dS}{dt} &= -\frac{\beta S I}{N}, &
  \frac{dI}{dt} &= \frac{\beta S I}{N} - \gamma I, &
  \frac{dR}{dt} &= \gamma I,
\end{align*}
with population $N = 200$, initial infected $I_0 = 5$, and simulation horizon $T = 50$ days.
The parameters of interest are the infection rate $\beta \in [0.1, 1.0]$ and recovery rate $\gamma \in [0.01, 0.5]$, with uniform priors over these ranges.

\textbf{Synthetic data.}
We generate data at ground truth $(\beta, \gamma) = (0.4, 0.15)$, corresponding to a basic reproduction number $R_0 = 2.67$.
Weekly incidence (new infections per week) is observed at weeks 1 and 5.
Observations are drawn from $y_k \sim \text{Poisson}(\lambda_k)$, where $\lambda_k = S(7(k{-}1)) - S(7k)$ is the true weekly incidence computed from the ODE solution.

\textbf{Integrand.}
The model evidence is $Z = \int p(y \mid \beta, \gamma)\, p(\beta, \gamma)\, d\beta\, d\gamma$.
For numerical stability, we integrate the shifted integrand $g(\beta, \gamma) = \exp(\ell(\beta, \gamma) - \ell_{\max})$, where $\ell(\beta, \gamma) = \log p(y \mid \beta, \gamma)$ is the Poisson log-likelihood and $\ell_{\max}$ is the maximum over a $200 \times 200$ grid.
The evidence is recovered as $Z = g_0 \cdot e^{\ell_{\max}} / |\Omega|$, where $g_0 = \int g\, d\beta\, d\gamma$.

\textbf{Ground truth.}
We compute the reference integral using an implementation of the Genz--Malik method with relative tolerance $10^{-10}$, yielding $\log Z \approx -7.01$.

\textbf{Method configuration.}
All methods receive the same evaluation budgets $N \in \{25, 50, 75, 100\}$.
BQ and HBQ use Sobol initial points; HBQ selects the kernel automatically from $\{\text{Mat\'ern-}1/2, 3/2, 5/2\}$, while BQ uses a fixed Mat\'ern-$3/2$ kernel.
HBQ uses the independent sum recombination (\cref{eq:independent-sum}) and the BIC splitting criterion.
MC results show the median and interquartile range over 20 independent runs.
QMC uses a single deterministic Sobol sequence.

\subsection{Structure Recovery and Calibration}\label{app:calibration}

\textbf{Piecewise GP construction.}
We define a piecewise Mat\'ern GP on $[0,1]$ with three segments:
\begin{itemize}
  \item $[0.0, 0.1]$: Mat\'ern-$1/2$, $\sigma = 5$, $\lscale = 0.02$ (rough),
  \item $[0.1, 0.9]$: Mat\'ern-$5/2$, $\sigma = 1$, $\lscale = 0.1$ (smooth), with imposed integral $\int f = 2$,
  \item $[0.9, 1.0]$: Mat\'ern-$1/2$, $\sigma = 5$, $\lscale = 0.02$ (rough).
\end{itemize}
Narrow rough edges flanking a wide smooth center create a challenging nonstationary structure: the rough regions occupy only 20\% of the domain but contribute substantial variability.

\textbf{Sampling procedure.}
For each trial, we draw $N = 400$ Sobol points on $[0,1]$, partition them into segments, and sample function values from the piecewise GP.
Segments are sampled left to right: the first segment is drawn jointly with its integral from the GP prior; each subsequent segment conditions on the boundary value from its left neighbor to ensure continuity.
The middle segment additionally conditions on its imposed integral.
This yields a continuous function realization with known per-segment integrals, so the ground-truth integral is $I = \sum_j I_j$.

\textbf{Method configuration.}
BQ fits a single stationary GP over the full domain with automatic kernel selection from $\{\text{Mat\'ern-}1/2, 3/2, 5/2\}$.
HBQ uses $N = 400$ evaluations with \texttt{initdiv}${}= 10$ ($41$ initial evaluation points), up to 2 splits (matching the true number of boundaries), and the same kernel candidates.
We run 200 independent trials and compute $z$-scores $z_i = (I_i - \hat{I}_i) / \hat{\sigma}_i$.

\textbf{Outlier analysis.}
Four of 200 HBQ trials have $|z| > 4$.
All four recover the correct number of segments and kernel orders; the cause is imprecise split placement.
When splits land at $0.125$ and $0.875$ (outside the rough regions), HBQ achieves $\sigma_z = 0.96$ over $n = 132$ trials.
When a split lands at $0.09375$ (inside the rough region $[0, 0.1]$), the smooth leaf inherits a sliver of rough territory that its Mat\'ern-$5/2$ kernel cannot capture, producing underestimated uncertainty.
This synthetic problem has instantaneous smoothness transitions (Mat\'ern-$1/2$ to Mat\'ern-$5/2$); real-world functions typically exhibit gradual transitions, providing more tolerance for split placement.

\subsection{Reaction--Diffusion PDE}\label{app:pde}

\textbf{PDE formulation.}
We consider the steady-state reaction--diffusion equation on $[0,1]$:
\[
  -\frac{d}{dx}\!\left(D(x)\frac{du}{dx}\right) + r\,u(x) = s(x), \qquad u(0) = u(1) = 0,
\]
with constant reaction rate $r = 5$.
The diffusivity models a composite material with a sigmoid transition:
\[
  D(x) = \exp\bigl(D_0 - \Delta D \cdot \sigma(x; c, \kappa)\bigr),
\]
where $\sigma(x; c, \kappa) = (1 + e^{-\kappa(x - c)})^{-1}$, $D_0 = -3$, $\Delta D = 5$, $c = 0.45$, and $\kappa = 25$.

\textbf{Source terms.}
Each trial uses $n_s = 7$ Gaussian bump sources $s(x) = \sum_j h_j \exp(-(x - c_j)^2 / w^2)$ with width $w = 0.02$, centers jittered uniformly around equispaced base positions, and heights drawn from $\mathrm{Unif}(5, 11)$.
In addition, $n_\delta = 4$ point sources of amplitude $A \sim \mathrm{Unif}(0.2, 0.7)$ are placed randomly in the low-diffusivity region $[0.55, 0.95]$, discretized as $A / h$ on the nearest grid point.
These create sharp, spatially localized spikes in the solution that are difficult for a single stationary GP to capture.

\textbf{Discretization and ground truth.}
The PDE is solved on a uniform grid with $N_\text{fine} = 8{,}000$ points using second-order finite differences with arithmetic-mean diffusivity at cell interfaces.
The ground-truth integral $\int_0^1 u(x)\,dx$ is computed via the trapezoidal rule on this fine grid.
Function evaluations use piecewise linear interpolation of the fine-grid solution.

\textbf{Method configuration.}
All methods receive the same evaluation budgets $N \in \{8, 16, 32, 64, 128, 256, 512\}$.
BQ and HBQ use Sobol initial points with automatic kernel selection from $\{\text{Mat\'ern-}1/2, 3/2, 5/2\}$.
HBQ uses the tree-conditioning recombination (\cref{eq:tree-conditioning}) and the BIC splitting criterion.
MC uses uniform random points; QMC uses a deterministic Sobol sequence.
We run 20 independent trials, each with a different random PDE instance.

\textbf{Multi-scale character.}
The characteristic lengthscale of the PDE solution scales as $\ell \sim \sqrt{D/r}$.
In the smooth region ($x < 0.45$), $\ell \approx 0.1$; in the spiky region ($x > 0.45$), $\ell \approx 0.008$.
This ${\sim}12{\times}$ lengthscale ratio is the source of HBQ's advantage: a single GP must compromise between these scales, while HBQ partitions the domain, fits each region with an appropriate lengthscale and invests integrand evaluations accordingly.

\section{Hyperparameter Fitting Details}\label{app:hyperparams}

\textbf{Smoothness selection.}
We search over the half-integer Matérn orders $\nu \in \{1/2, 3/2, 5/2\}$, corresponding to $C^0$, $C^1$, and $C^2$ sample paths, respectively.
For each candidate, we optimize the lengthscale $\lscale$ via the profile marginal likelihood and retain the $(\nu, \lscale)$ pair with the lowest penalized negative log-likelihood.

\textbf{Root node prior.}
For the root node, we use a penalized complexity (PC) prior~\citep{fuglstad2019} on the lengthscale, taking the constant function ($\lscale \to \infty$) as the base model and calibrating so that $P(\lscale < |\domain| / 10) = 0.025$.

\textbf{Child node prior.}
When a leaf splits, each child receives the log-normal prior~\eqref{eq:child-lscale-prior} anchored at the parent's fitted lengthscale.
In the ARD setting, where each dimension has its own lengthscale $\lscale_j$, we use the geometric mean $\bar{\lscale} = (\prod_j \lscale_j)^{1/d}$ as the anchor.
This prevents the lengthscale from escaping to degenerate values, which is particularly important in the ARD setting where per-dimension lengthscales along flat directions are otherwise unconstrained.

\section{Further Experiments}\label{app:additional}

These experiments complement the evaluation of \cref{sec:experiments}: they examine HBQ in a higher dimension (\cref{app:higherd}), its wall-clock cost against accuracy (\cref{app:walltime}), the effect of the recombination scheme (\cref{app:tcis}), and the reliability of the BIC criterion across penalty strengths and budgets (\cref{app:bic}).
All use the same Julia implementation as the main experiments.

\subsection{Higher-Dimensional Genz Benchmark}\label{app:higherd}
This experiment extends the two-dimensional Genz benchmark of \cref{sec:genz} to three dimensions to test whether the hierarchical construction retains its advantage in higher dimensions.
We integrate two Genz families over the unit cube $[0,1]^3$: Corner Peak, the canonical nonstationary integrand on which hierarchical partitioning is expected to help, and Product Peak, a localized but separable control.
For each family we draw $10$ random instances, sampling the difficulty vector as $\bm{a} = C \bm{w} / \sum_j w_j$ with $C = 9$ and $\bm{w}, \bm{u} \sim \mathrm{Unif}([0,1]^3)$.
We compare the same four estimators as in the main text---MC, QMC, BQ (a single-domain surrogate on Sobol points), and HBQ (adaptive tree refinement run to the budget)---at budgets $N \in \{64, 128, 256, 512, 1024\}$, selecting in both BQ and HBQ among Matérn kernels of smoothness $\nu \in \{1/2, 3/2, 5/2\}$.
\Cref{fig:e1} reports the median absolute error over the $10$ instances with interquartile bands.
At $N = 1024$, HBQ attains a median error of $1.82 \times 10^{-5}$ versus $8.81 \times 10^{-5}$ for BQ on Corner Peak (a $4.8\times$ improvement) and $2.1 \times 10^{-4}$ versus $3.8 \times 10^{-4}$ on Product Peak ($1.8\times$), while MC and QMC are one to several orders of magnitude worse.
HBQ matches or exceeds BQ at every budget, confirming that its gains extend to $d = 3$ and that it carries no penalty when the hierarchical advantage is small.

\begin{figure}[t]
  \centering
  \includegraphics[width=0.85\textwidth]{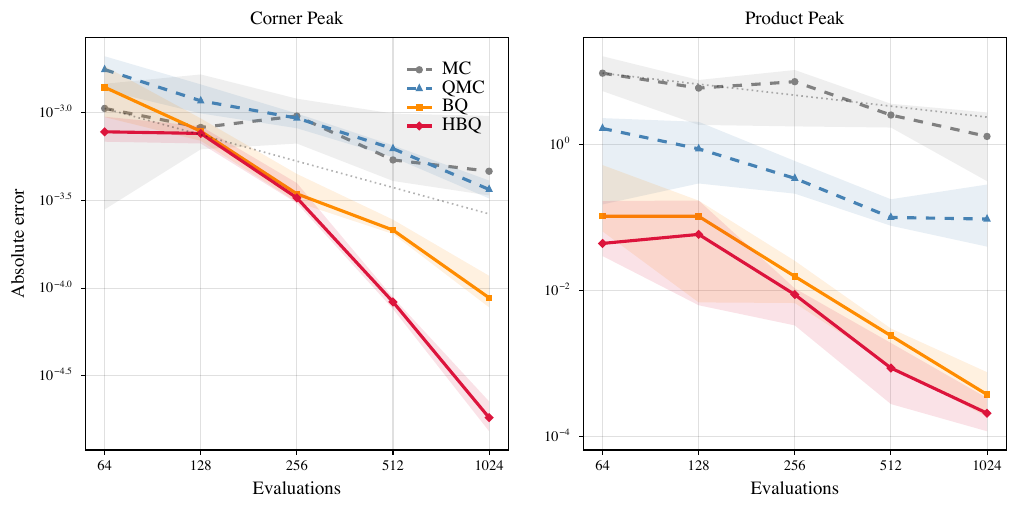}
  \caption{
    \textbf{Higher-dimensional Genz benchmark ($d = 3$).}
    Median absolute error (interquartile bands) versus the number of function evaluations on the Corner Peak and Product Peak families ($10$ random instances each).
    HBQ retains its advantage over BQ at $d = 3$, while MC and QMC are one to several orders of magnitude worse.
  }
  \label{fig:e1}
\end{figure}

\subsection{Wall-Clock Cost versus Accuracy}\label{app:walltime}
We measure wall-clock cost against accuracy on the Genz Corner Peak family in $d = 2$, using the parameter protocol and budget grid of \cref{sec:genz} ($C = 9$, $N \in \{32, 64, 128, 256, 512\}$).
Because the Genz integrand is analytically cheap, we inject a fixed $50\,\mathrm{ms}$ delay into every integrand evaluation to emulate an expensive simulator, so that total run time is dominated by the number of evaluations rather than by method bookkeeping.
Two implementation optimizations---inheriting the parent's selected kernel for split candidates, and a prescreen that fully refits only the winning split candidate---reduce HBQ's algorithmic overhead by up to roughly $20\times$ with no material change in accuracy (paired validation).
We compare MC, QMC, BQ, and HBQ on $10$ random instances, timing each run end-to-end after a warm-up pass and a garbage-collection call.
Because the injected cost dominates, all four methods incur essentially identical wall-clock at a given budget (about $13.6\,\mathrm{s}$ at $N = 256$ and $27.5\,\mathrm{s}$ at $N = 512$), so the comparison reduces to accuracy at matched compute (\cref{fig:e2}).
At $N = 256$, HBQ reaches a median absolute error of $5.1 \times 10^{-5}$ versus $4.2 \times 10^{-4}$ for BQ---about eightfold more accurate at matched wall-clock---and at $N = 512$, $2.0 \times 10^{-5}$ versus $7.9 \times 10^{-5}$, about fourfold; MC and QMC remain well above HBQ at moderate-to-large budgets.

\begin{figure}[t]
  \centering
  \includegraphics[width=0.7\textwidth]{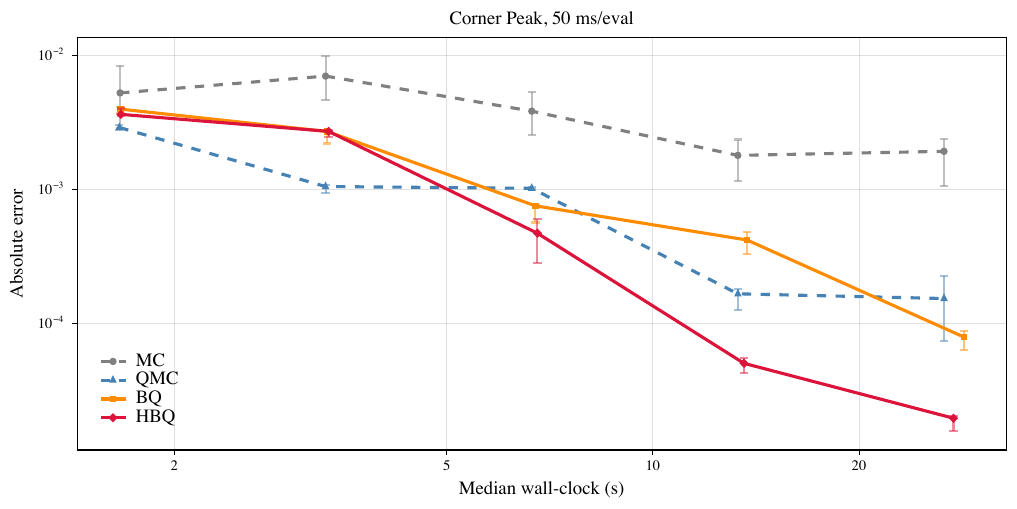}
  \caption{
    \textbf{Wall-clock cost versus accuracy.}
    Median absolute error against median wall-clock on Genz Corner Peak ($d = 2$, $10$ instances) with a $50\,\mathrm{ms}$ delay injected per evaluation.
    Because the evaluation cost dominates, all methods share essentially the same wall-clock at each budget, so HBQ's per-evaluation accuracy advantage translates directly into accuracy at matched compute.
  }
  \label{fig:e2}
\end{figure}

\subsection{Tree Conditioning versus Independent Sum}\label{app:tcis}
To quantify the effect of the recombination scheme, we re-ran the $d = 2$ Genz experiment under both the tree-conditioning and independent-sum modes with all other settings held fixed, so that the two differ only in the final recombination step (\cref{sec:recombination}).
We used Corner Peak---where HBQ shows its largest gain, and hence exercises the recombination machinery most---and Product Peak as a control whose tree rarely splits, at $C = 9$, drawing $20$ random instances per family and running each at budgets $N \in \{32, 64, 128, 256, 512\}$.
Comparing the two modes per instance and taking medians over trials, the ratio of median absolute error between tree conditioning and independent sum was $1.000$ in every family--budget cell (two families $\times$ five budgets; maximum deviation below $0.05\%$), and the posterior standard deviations matched to the same precision (\cref{fig:e3}).
On these families the cross-subdomain correlations reintroduced by tree conditioning therefore leave both the estimate and its reported uncertainty essentially unchanged, so the independent sum serves as an inexpensive ablation while tree conditioning remains the primary method at no accuracy cost.

\begin{figure}[t]
  \centering
  \includegraphics[width=0.85\textwidth]{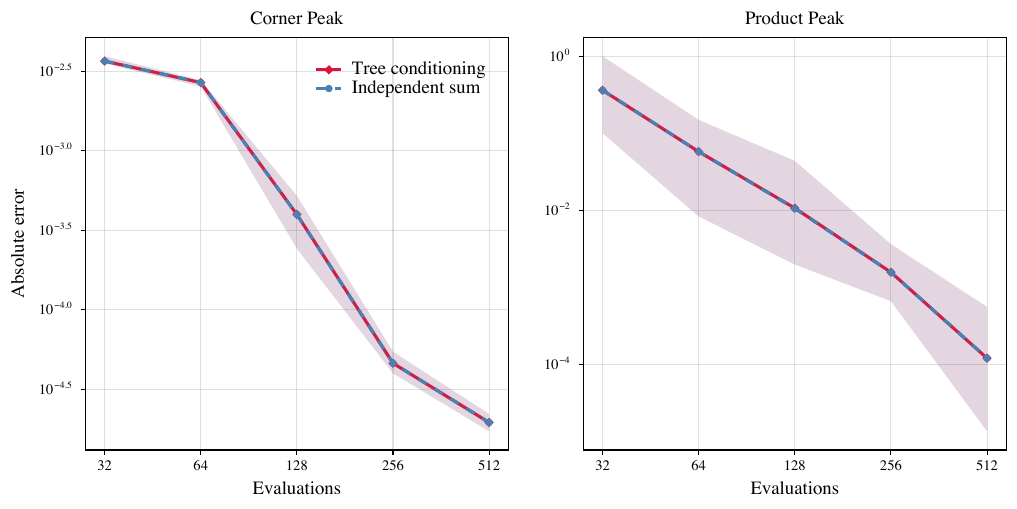}
  \caption{
    \textbf{Tree conditioning versus independent sum.}
    Convergence under the two recombination modes on Genz Corner Peak and Product Peak ($d = 2$, $20$ instances).
    The two modes coincide to within $0.05\%$ in both median absolute error and posterior standard deviation across all budgets.
  }
  \label{fig:e3}
\end{figure}

\subsection{BIC Threshold Sensitivity}\label{app:bic}
To assess whether the BIC criterion remains reliable at the small per-leaf sample sizes produced by adaptive splitting, we ran a sensitivity study on the piecewise-GP recovery task of \cref{sec:calibration}, whose ground truth comprises three segments---two narrow rough Matérn-$1/2$ edges flanking a wide smooth Matérn-$5/2$ center over $[0,1]$.
We swept the BIC penalty multiplier over $\{0.5, 1, 2, 4\}$, applied to the threshold of \cref{eq:bic-criterion} (with $1\times$ the default), and the evaluation budget over $N \in \{100, 200, 400\}$, running $100$ trials per cell---$1200$ in total---with trial seeds shared across multipliers for paired comparison.
For each cell we recorded the structure-recovery rate (fraction of trials recovering the correct number of leaves), the over- and under-split rates, the absolute integration error, empirical $95\%$ coverage, and the per-leaf evaluation counts (\cref{fig:e4}).
The criterion never over-split: the over-split rate was $0\%$ across all $1200$ trials, and at the default penalty recovery was $99\%$ at $N = 100$ and $100\%$ at $N = 200$ and $400$, even though the median per-leaf sample count was only about $34$ at $N = 100$.
The sole failure mode was conservative under-splitting under an aggressive penalty at the tightest budget ($40\%$ recovery, $60\%$ under-split at $4\times$ and $N = 100$), which resolved as the budget grew ($94\%$ at $N = 200$, $100\%$ at $N = 400$).
Empirical $95\%$ coverage stayed close to nominal ($87$--$96\%$) in all cells except the aggressive-penalty, tightest-budget cell ($4\times$, $N = 100$), where under-splitting reduced it to $75\%$.
In this regime the BIC approximation is thus empirically robust, erring if at all toward too few leaves rather than spurious partitions.

\begin{figure}[t]
  \centering
  \includegraphics[width=0.85\textwidth]{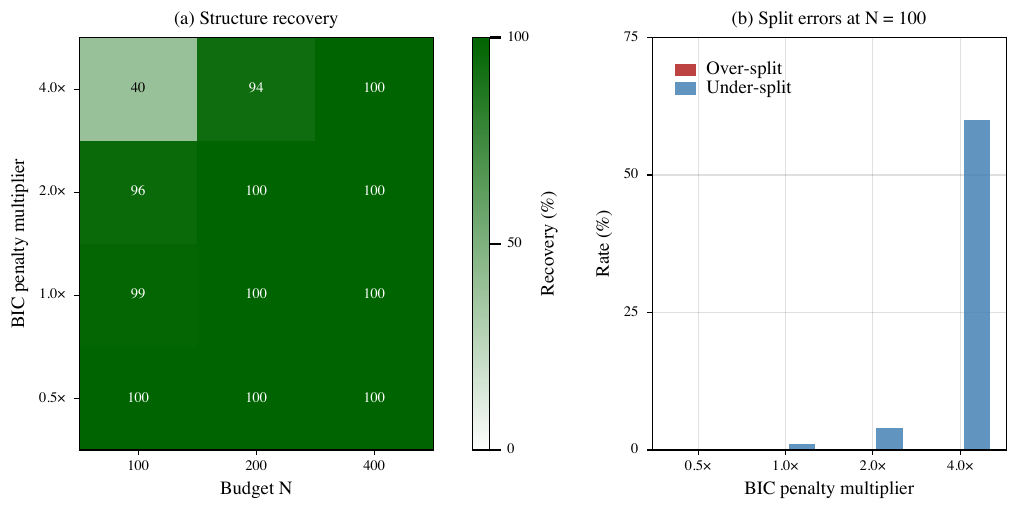}
  \caption{
    \textbf{BIC threshold sensitivity.}
    \textbf{(a)}~Structure-recovery rate over the BIC penalty multiplier and evaluation budget ($1200$ trials).
    \textbf{(b)}~Over- and under-split rates at $N = 100$.
    The over-split rate is $0\%$ throughout; the only failure mode is conservative under-splitting at the most aggressive penalty and tightest budget.
  }
  \label{fig:e4}
\end{figure}

\end{document}